\documentclass[letterpaper,10pt,conference]{ieeeconf}

\IEEEoverridecommandlockouts
\usepackage{amsmath,amssymb,amsfonts}

\usepackage{amsthm}

\usepackage{algorithm}
\usepackage{algorithmic}
\usepackage{graphicx}
\usepackage{booktabs}
\usepackage{array}
\usepackage{textcomp}
\usepackage{bm}
\usepackage{xcolor}

\usepackage[
    style=ieee,
    sorting=none,
    sortcites=true
]{biblatex}

\usepackage{hyperref}

\definecolor{advisorblue}{RGB}{0,70,170}

\definecolor{checkred}{RGB}{200,0,0}

\newtheorem{theorem}{Theorem}

\theoremstyle{definition}
\newtheorem{definition}{Definition}
\newtheorem{problem}{Problem}

\title{\LARGE \bf
Fast Direction-Conditioned Reachability for Motion Prediction Under Model Uncertainty
}

\author{
Hrishav Das and Melkior Ornik%
\thanks{The authors are with the Department of Aerospace Engineering,
University of Illinois Urbana-Champaign, Urbana, IL 61801, USA.
Emails: \texttt{\{hrishav2,mornik\}@illinois.edu}}%
}

\begin{document}

\maketitle
\thispagestyle{empty}
\pagestyle{empty}

\begin{abstract}
To avoid collisions, a robot must repeatedly predict where nearby agents may
move, usually with an imperfect model of their dynamics. Reachable sets
provide such predictions, but computing them when the system matrices
themselves are uncertain can become computationally expensive and conservative
for frequent replanning. Moreover,
a planner often needs to know only how far an agent can move in one
particular direction, for example toward the robot, rather than the complete
reachable set.

We propose a direction-conditioned reachability method for linear systems
with uncertain state and input matrices. Given a query direction $d$, the
method selects one admissible model $(A^\star,B^\star)$ whose reachable set
extends nearly as far along $d$ as the reachable set of the entire uncertain
model family, and then computes the reachable set of only this model with a
standard reachability solver. On an uncertain linearized bicycle model, the
complete selection-and-computation pipeline is about three times faster than
computing the reachable set of the full uncertain family in the CORA toolbox,
while its extent along $d$ is within $5\%$ of the full family's in the
reported directions. We also use the method in a closed-loop multi-vehicle
simulation in which the robot queries, at each replanning step, how far each
nearby vehicle can move toward it, and replans to avoid the resulting sets.
\end{abstract}

\section{INTRODUCTION}
\label{sec:introduction}


Robots operating around moving agents must repeatedly reason about where those
agents may move before committing to a plan. Reachable sets provide a natural
tool for this purpose: they describe all states a system can attain from an
initial set under admissible inputs and uncertainty. Reachability methods can
broadly be divided into Hamilton--Jacobi methods
\cite{bansal2017hamilton} and set-propagation methods
\cite{annurev:/content/journals/10.1146/annurev-control-071420-081941}.
Hamilton--Jacobi methods can characterize nonlinear reachability but scale
poorly with state dimension \cite{chen2018hamilton,li2021prediction}.
Set-propagation methods scale more favorably to higher-dimensional systems,
but can become conservative, particularly for nonlinear dynamics.

Reachability has become widely used in the robotics literature
\cite{hsu2023safety,wabersich2023data}. In motion planning, reachable sets
are used to restrict candidate trajectories to regions that remain feasible
or safe, and have been incorporated directly into real-time planning
frameworks for robots
\cite{kousik2020bridging,leung2020infusing,kochdumper2024real,seo2022real}.
In autonomous driving, the same idea appears in several forms: reachable sets
have been used to verify safety online \cite{althoff2014online}, to predict the
possible future motion of surrounding traffic participants
\cite{althoff2016set}, and to incorporate those predictions into
collision-aware planning and decision-making
\cite{koschi2020set,li2021prediction}. Related ideas also appear in robust model predictive control. In tube MPC,
bounded model error and disturbances are handled by maintaining a tube around
a nominal trajectory, so that state and input constraints remain satisfied
despite uncertainty \cite{yu2013tube}. Extensions to nonlinear
systems similarly optimize or propagate uncertainty tubes within a receding
horizon framework \cite{lopez2019dynamic}. Tube-based methods provide another example in which uncertainty must be repeatedly accounted for within an online planning or control loop.
\begin{figure}[h]
    \centering
    \includegraphics[width=0.90\linewidth]{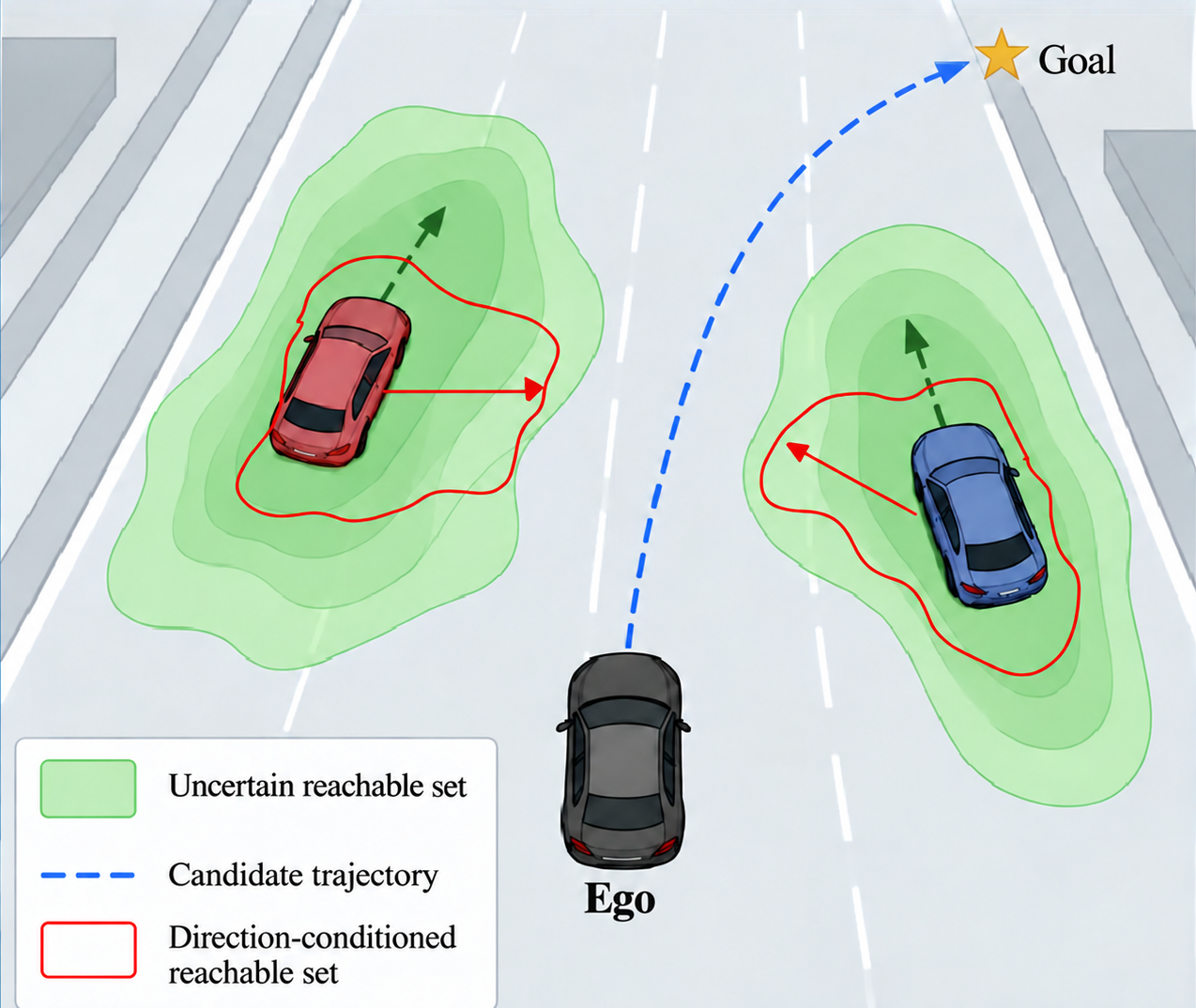}
    \caption{Direction-conditioned reachable sets for planning.}
    \label{fig:motivation_reachability}
\end{figure}

These applications create a need for reachable-set computations that are
both efficient and informative, particularly in the presence of model
uncertainty. When the dynamics themselves are uncertain, a reachability
algorithm must account not only for uncertain states and inputs, but also for
every admissible model in the uncertain family. Prior work reduces the online cost of reachability through efficient set representations \cite{girard2005reachability,le2010reachability,huang2025reachability},
by moving expensive computation offline \cite{kousik2020bridging,majumdar2017funnel},
by restricting the prediction model using interaction-specific information
\cite{bansal2020hamilton,li2021prediction}, or by simplifying the state
representation when only high-level decisions are needed
\cite{kochdumper2024real}.
This computational efficiency matters in online planning, where reachable
sets are recomputed whenever new observations arrive \cite{althoff2014online,koschi2017spot,kochdumper2024real}.

At the same time, many planning decisions do not require the complete
reachable set. Instead, the set is used to answer specific geometric questions, such as whether a candidate
trajectory may collide with another agent
\cite{kousik2020bridging,leung2020infusing,kochdumper2024real}.
For convex sets, many such questions can be phrased using the support
function, which measures how far a set extends along a given direction
\cite{girard2008efficient,le2010reachability}. For collision avoidance, the
relevant quantity may be how far another vehicle can move toward the robot.
If only a few directions matter, computing the full reachable set may
therefore perform unnecessary work (Fig. \ref{fig:motivation_reachability}). Support-function methods represent reachable sets through their
extent along fixed template directions
\cite{girard2008efficient,le2010reachability}.

Building on this observation, we ask a simple but useful question: if the
planner only needs the reachable extent along a task-relevant direction, why
compute the full uncertain reachable set first? We therefore propose a faster
direction-conditioned approach that selects a single admissible model tailored
to the queried direction and propagates only that selected model. The main contributions of this work are:
\begin{itemize}
    \item A direction-conditioned formulation for selecting a state matrix
    $A^\star\in\mathcal A$ and an input matrix $B^\star\in\mathcal B$ from
    sets of uncertain matrices, given a query direction.
    \item A selection-and-computation procedure that, for a single query
    direction, is about three times faster than computing the reachable set
    of the full uncertain family.
    \item A closed-loop multi-vehicle simulation in which these
    predictions are used for replanning around moving vehicles.
\end{itemize}

The remainder of the manuscript is organized as follows.
Section~\ref{sec:problem_formulation} introduces the uncertain system model and
states the direction-conditioned selection problem.
Section~\ref{sec:directional_selection} develops the procedure for selecting
$(A^\star,B^\star)$ and computing the corresponding reachable set.
Section~\ref{sec:bicycle_evaluation} evaluates its directional accuracy and
computation time on an uncertain bicycle model, and
Section~\ref{sec:driving_simulation} applies the method to closed-loop
multi-vehicle collision-avoidance planning.

\section{PROBLEM FORMULATION}
\label{sec:problem_formulation}

Consider the finite-horizon linear system
\begin{equation}
    \dot{x}(t)=Ax(t)+Bu(t),
    \qquad
    x(0)\in\mathcal X_0,
    \qquad
    u(t)\in\mathcal U,
    \label{eq:problem_system}
\end{equation}
for $t\in[0,T]$. The matrices $A$ and $B$ are constant over this horizon,
but their exact values are unknown. We only know that
$A\in\mathcal A$ and $B\in\mathcal B$, where
\begin{equation}
\begin{aligned}
    \mathcal A
    &=
    \left\{
        A_c+\sum_{i=1}^{n_A}\alpha_iG_i^A:
        \alpha_i\in[-1,1]
    \right\},\\
    \mathcal B
    &=
    \left\{
        B_c+\sum_{j=1}^{n_B}\beta_jG_j^B:
        \beta_j\in[-1,1]
    \right\}.
\end{aligned}
\label{eq:matrix_uncertainty_problem}
\end{equation}
Sets of this form are called matrix zonotopes
\cite{girard2005reachability,althoff2011reachable,alanwar2021data}.
Each admissible matrix is a nominal (center) matrix $A_c$ or $B_c$ plus a
weighted sum of generator matrices $G_i^A$ or $G_j^B$. Each generator
describes one source of model error, which may perturb several matrix entries
together, and its coefficient $\alpha_i$ or $\beta_j$ scales that error
between its two extremes. In Sec.~\ref{subsec:prediction_model_uncertainty},
the generators perturb specific terms of a linearized vehicle model.
We assume that $\mathcal X_0$ is compact and convex and that $\mathcal U$ is
a compact convex polytope.

\begin{definition}[Reachable set]
For fixed matrices $(A,B)$, the reachable set at time $T$ is
\begin{equation}
\begin{aligned}
    \mathcal R_{A,B}(T;\mathcal X_0)
    =
    \bigl\{
        x(T)\; \big|\;&
        x(0)\in\mathcal X_0,\\
        &\dot x(t)=Ax(t)+Bu(t),\\
        &u(t)\in\mathcal U
        \text{ for almost every }t\in[0,T]
    \bigr\}.
\end{aligned}
\label{eq:reachable_set_definition}
\end{equation}
\end{definition}

\begin{definition}[Support function]
For a compact set $\mathcal S\subset\mathbb R^n$, its support function in
direction $d\in\mathbb R^n$ is
\begin{equation}
    \rho_{\mathcal S}(d)
    =
    \max_{x\in\mathcal S} d^\top x.
    \label{eq:support_definition}
\end{equation}
\end{definition}
For a unit vector $d$, $\rho_{\mathcal S}(d)$ is how far $\mathcal S$ extends
along $d$ \cite{le2010reachability}. For example, if $d$ points from another
vehicle toward a robot and $\mathcal S$ is that vehicle's reachable set,
$\rho_{\mathcal S}(d)$ measures how far the vehicle can advance toward the
robot.

\begin{problem}[Direction-Conditioned Model Selection]
\label{prob:joint_directional_problem}
Given $\mathcal A,\mathcal B,\mathcal X_0,\mathcal U,T$, and a nonzero
direction $d$, find
\begin{equation}
    \max_{\substack{A\in\mathcal A\\B\in\mathcal B}}
    \rho_{\mathcal R_{A,B}(T;\mathcal X_0)}(d).
    \label{eq:joint_directional_problem}
\end{equation}
\end{problem}

Standard reachability tools for uncertain linear systems compute one set that
encloses the reachable sets of all admissible $(A,B)$ simultaneously
\cite{althoff2011reachable,luo2023reachability}, which requires tracking how
the matrix uncertainty interacts with the state set at every time step.
Instead, we approximately solve Problem~\ref{prob:joint_directional_problem}
to select a single admissible model and pass this fixed model to a standard
reachability solver.

\section{DIRECTION-CONDITIONED REACHABILITY}
\label{sec:directional_selection}

We approximately solve Problem~\ref{prob:joint_directional_problem} in four
steps. First, we write the support function for a fixed pair $(A,B)$.
Second, we select $A^\star$ by maximizing the part of this expression due to
the initial set. Third, with $A^\star$ fixed, we select $B^\star$ by
maximizing the part due to the inputs. Finally, we compute
$\mathcal R_{A^\star,B^\star}(T;\mathcal X_0)$.

\subsection{Directional Support for a Fixed Model}

For fixed $(A,B)$, the solution of \eqref{eq:problem_system} is
\begin{equation}
    x(T)
    =
    e^{AT}x_0
    +
    \int_0^T e^{A(T-t)}Bu(t)\,dt.
    \label{eq:linear_solution}
\end{equation}

Taking the support in direction $d$ gives the standard expression
\cite{le2010reachability}
\begin{equation}
\begin{aligned}
    h_d(A,B)
    &\triangleq
    \rho_{\mathcal R_{A,B}(T;\mathcal X_0)}(d)\\
    &=
    \rho_{\mathcal X_0}\!\left(e^{A^\top T}d\right)
    +
    \int_0^T
    \rho_{\mathcal U}\!\left(
        B^\top e^{A^\top(T-t)}d
    \right)dt .
\end{aligned}
\label{eq:fixed_model_support}
\end{equation}
The first term is the extent along $d$ contributed by the initial set, and
the integral is the extent contributed by the inputs over $[0,T]$.
Problem~\ref{prob:joint_directional_problem} is thus equivalent to
maximizing \eqref{eq:fixed_model_support} over $A\in\mathcal A$ and
$B\in\mathcal B$.

\subsection{State-Matrix Selection}

The matrix $A$ appears in both terms of \eqref{eq:fixed_model_support}.
Rather than optimizing both terms jointly, we select $A$ using only the
first term:
\begin{equation}
    A^\star
    \in
    \arg\max_{A\in\mathcal A}
    \rho_{\mathcal X_0}\!\left(e^{A^\top T}d\right).
    \label{eq:Astar}
\end{equation}
Writing $A(\alpha)=A_c+\sum_{i=1}^{n_A}\alpha_iG_i^A$, this becomes
\begin{equation}
    \max_{\alpha\in[-1,1]^{n_A}}
    \rho_{\mathcal X_0}\!\left(
        e^{A(\alpha)^\top T}d
    \right).
    \label{eq:Astar_coefficients}
\end{equation}
Because $e^{A(\alpha)^\top T}$ depends nonlinearly on $\alpha$, this problem
is nonconvex. In our experiments, we solve it using a dense grid search over
$\alpha$, followed by local refinement.

Selecting $A$ from the first term alone is an approximation: $A^\star$ still
affects the input term, but only through the subsequent $B$-selection step.
We expect the approximation to be most accurate when the input contribution,
rather than the drift dynamics, dominates the reachable extent over the
horizon. We do not
claim that $(A^\star,B^\star)$ solves
Problem~\ref{prob:joint_directional_problem} globally; its accuracy is
evaluated empirically in Sec.~\ref{sec:bicycle_evaluation}.

\subsection{Input-Matrix Selection}

Once $A^\star$ has been selected, define:
\begin{equation}
    p^\star(t)
    =
    e^{A^{\star\top}(T-t)}d.
    \label{eq:pstar}
\end{equation} as in \cite{das2026input}.
Substituting \eqref{eq:pstar} into the second term of
\eqref{eq:fixed_model_support}, the input-matrix selection problem is
\begin{equation}
    B^\star
    \in
    \arg\max_{B\in\mathcal B}
    \int_0^T
    \rho_{\mathcal U}\!\left(B^\top p^\star(t)\right)dt.
    \label{eq:Bstar}
\end{equation}
Let $\mathcal U=\operatorname{conv}\{u_1,\ldots,u_{N_u}\}$. Since a linear
function attains its maximum over a polytope at a vertex,
\begin{equation}
    \rho_{\mathcal U}\!\left(B^\top p^\star(t)\right)
    =
    \max_{i=1,\ldots,N_u}
    p^\star(t)^\top B u_i.
    \label{eq:support_polytope_vertices}
\end{equation}
We denote the objective in \eqref{eq:Bstar} by
\begin{equation}
    J_d(B)
    \triangleq
    \int_0^T
    \max_i
    p^\star(t)^\top B u_i\,dt ,
    \label{eq:Jd_definition}
\end{equation}
and evaluate the integral numerically using midpoint samples on $[0,T]$.

\begin{theorem}[Extreme-point input-matrix selection]
\label{thm:B_extreme_point}
Let
$\mathcal U=\operatorname{conv}\{u_1,\ldots,u_{N_u}\}$
and let $\mathcal B$ be a compact matrix polytope. For fixed $A^\star$,
$J_d(B)$ in \eqref{eq:Jd_definition} is convex in $B$. Consequently, at least
one maximizer of \eqref{eq:Bstar} is an extreme point of $\mathcal B$.
\end{theorem}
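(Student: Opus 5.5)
The plan is to prove convexity pointwise in $t$ and then pass it through the integral. After that, a Bauer-type maximum principle for convex functions on compact convex sets gives the extreme-point conclusion.

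\textbf{Step 1: pointwise convexity.} Fix $t\in[0,T]$. Since $A^\star$ and $d$ are fixed, $p^\star(t)$ is a fixed vector. For each vertex $u_i$, the map
\[
B\mapsto p^\star(t)^\top B u_i=\operatorname{tr}\bigl(u_i\,p^\star(t)^\top B\bigr)
\]
is linear in $B$, with $B$ viewed as a vector in $\mathbb R^{n\times m}$. A pointwise maximum of finitely many linear functions is convex. Hence
\[
g_t(B)\triangleq\max_{i} p^\star(t)^\top B u_i=\rho_{\mathcal U}\bigl(B^\top p^\star(t)\bigr)
\]
is convex in $B$ for every $t$.

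\textbf{Step 2: convexity of the integral.} For $B_1,B_2\in\mathcal B$ and $\lambda\in[0,1]$, Step 1 gives
\[
g_t(\lambda B_1+(1-\lambda)B_2)\le\lambda g_t(B_1)+(1-\lambda)g_t(B_2)
\]
for all $t$. Integrating over $[0,T]$ preserves this inequality, so $J_d$ is convex.

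For the integrals to be well defined, note that $t\mapsto p^\star(t)$ is continuous, so each $g_t(B)$ is continuous in $t$ and hence integrable. The same argument applies verbatim to the midpoint-sum approximation, because a nonnegative combination of convex functions is convex. Also, $J_d$ is a finite convex function on the whole matrix space, so it is continuous; together with compactness of $\mathcal B$, this shows that a maximizer exists.

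\textbf{Step 3: extreme-point maximizer.} Since $\mathcal B$ is a compact polytope, it is the convex hull of its finitely many extreme points $V_1,\dots,V_K$. Write any maximizer as $B^\dagger=\sum_k\lambda_kV_k$ with $\lambda_k\ge 0$ and $\sum_k\lambda_k=1$. Convexity (Jensen over a finite combination) gives
\[
J_d(B^\dagger)\le\sum_k\lambda_kJ_d(V_k)\le\max_kJ_d(V_k).
\]
Therefore some $V_k$ attains the maximum value, and that extreme point is itself a maximizer. For the matrix zonotope \eqref{eq:matrix_uncertainty_problem}, these extreme points are among the vertices obtained by setting $\beta_j\in\{-1,1\}$.

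The main obstacle is only a bookkeeping one: the representation of $\mathcal B$ as the convex hull of finitely many extreme points. For a polytope this is the Minkowski/Krein–Milman theorem in finite dimensions, and I would simply invoke it. Everything else follows directly from the definitions.
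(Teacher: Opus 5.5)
Your proof is correct and follows the paper's argument. Both use the same three steps: the integrand is a pointwise maximum of functions linear in $B$, integration preserves convexity, and a convex function on a compact polytope attains its maximum at an extreme point. The only difference is that you prove the last step directly with Jensen's inequality on the vertex representation of $\mathcal B$, where the paper simply cites it.
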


\begin{proof}
For each fixed $t$, the integrand is the pointwise maximum of functions that
are linear in $B$, hence convex; integration preserves convexity. A convex
function over a compact polytope attains its maximum at an extreme point
\cite{bertsekas2003convex}.
\end{proof}

The matrix zonotope $\mathcal B$ in \eqref{eq:matrix_uncertainty_problem} is
the affine image of the box $[-1,1]^{n_B}$, so each of its extreme points is
the image of a box vertex
\cite{girard2005reachability,alanwar2021data}. It therefore suffices to
evaluate the at most $2^{n_B}$ candidates
\begin{equation}
    B_\sigma
    =
    B_c+\sum_{j=1}^{n_B}\sigma_jG_j^B,
    \qquad
    \sigma\in\{-1,1\}^{n_B},
    \label{eq:B_vertices}
\end{equation}
and choose
\begin{equation}
    B^\star
    \in
    \arg\max_{\sigma\in\{-1,1\}^{n_B}}
    J_d(B_\sigma).
    \label{eq:Bstar_enumeration}
\end{equation}

\subsection{Reachable Set of the Selected Model}

With both matrices selected, the system
\begin{equation}
    \dot{x}(t)
    =
    A^\star x(t)+B^\star u(t),
    \qquad
    x(0)\in\mathcal X_0,
    \qquad
    u(t)\in\mathcal U
    \label{eq:selected_system}
\end{equation}
has no matrix uncertainty, and we compute
$\mathcal R_{A^\star,B^\star}(T;\mathcal X_0)$ with a standard linear
reachability algorithm \cite{girard2006efficient,le2010reachability}; in
Sec.~\ref{sec:bicycle_evaluation} we use the CORA toolbox
\cite{althoff2015introduction}.

\begin{algorithm}[h]
\caption{Direction-Conditioned Model Selection and Reachability}
\label{alg:directional_reachability}
\begin{algorithmic}[1]
\REQUIRE $\mathcal A,\mathcal B,\mathcal X_0,\mathcal U,d,T$
\STATE Solve \eqref{eq:Astar_coefficients} numerically to obtain $A^\star$
\STATE Compute $p^\star(t)$ from \eqref{eq:pstar}
\STATE Enumerate \eqref{eq:B_vertices} and use \eqref{eq:Bstar_enumeration}
       to obtain $B^\star$
\STATE Compute $\mathcal R_{A^\star,B^\star}(T;\mathcal X_0)$
\RETURN $\mathcal R_{A^\star,B^\star}(T;\mathcal X_0)$
\end{algorithmic}
\end{algorithm}

We call the output of Algorithm~\ref{alg:directional_reachability} the
\emph{direction-conditioned reachable set}, because the model
$(A^\star(d),B^\star(d))$ used to compute it depends on $d$.

\section{BICYCLE MODEL EVALUATION}
\label{sec:bicycle_evaluation}

We compare two reachable-set computations for the same uncertain model: the
uncertain CORA outer approximation $\mathcal R_{\mathrm{unc}}$, which encloses
the reachable sets associated with the matrix family
$\mathcal A\times\mathcal B$, and the direction-conditioned reachable set
$\mathcal R_{A^\star,B^\star}$ of Algorithm~\ref{alg:directional_reachability}.
Both are computed with CORA. We compare their extents along $d$ and the
computation required to obtain them.

\subsection{Vehicle Model and Uncertainty}
\label{subsec:prediction_model_uncertainty}

We use the kinematic bicycle model \cite{kong2015kinematic}
\begin{equation}
    \dot{x} = v\cos\psi,\quad
    \dot{y} = v\sin\psi,\quad
    \dot{\psi} = \frac{v}{L}\tan\delta,\quad
    \dot{v} = a,
\label{eq:bicycle_nonlinear}
\end{equation}
where $(x,y)$ is position, $\psi$ is heading, $v$ is speed, $a$ is
longitudinal acceleration, and $\delta$ is steering angle. We use
$L=0.42\,\mathrm{m}$ as the wheelbase of the simulated small-scale vehicle.

Let
\[
    \xi=[x\;y\;\psi\;v]^\top,
    \qquad
    u=[a\;\delta]^\top .
\]
Around an operating point $(\bar\xi,\bar u)$, the dynamics are locally
approximated by \cite{kong2015kinematic,hebisch2020model}
\[
    \dot{\xi}\approx A_\ell\xi+B_\ell u+c,
\]
where
\begin{equation}
A_\ell=
\begin{bmatrix}
0&0&-\bar v\sin\bar\psi&\cos\bar\psi\\
0&0& \bar v\cos\bar\psi&\sin\bar\psi\\
0&0&0&\dfrac{\tan\bar\delta}{L}\\
0&0&0&0
\end{bmatrix},
\label{eq:bicycle_Al}
\end{equation}
\begin{equation}
B_\ell=
\begin{bmatrix}
0&0\\
0&0\\
0&\dfrac{\bar v}{L\cos^2\bar\delta}\\
1&0
\end{bmatrix},
\label{eq:bicycle_Bl}
\end{equation}
and
\[
    c
    =
    f(\bar\xi,\bar u)
    -A_\ell\bar\xi
    -B_\ell\bar u,
\]
with $f$ denoting the right-hand side of
\eqref{eq:bicycle_nonlinear}. Appending a constant state,
$z=[\xi^\top\;1]^\top$, gives the linear form used in
\eqref{eq:problem_system}:
\begin{equation}
    \dot z=A_cz+B_cu,
    \quad
    A_c=
    \begin{bmatrix}
        A_\ell & c\\
        0_{1\times4} & 0
    \end{bmatrix},
    \quad
    B_c=
    \begin{bmatrix}
        B_\ell\\
        0_{1\times2}
    \end{bmatrix}.
    \label{eq:bicycle_augmented}
\end{equation}

To evaluate the proposed method under model mismatch, we place bounded uncertainty on selected Jacobian and input-gain terms of the local model
\cite{gonultas2023system,seong2024robust}.
We represent these perturbations
using the matrix zonotopes
\begin{equation}
\begin{aligned}
    \mathcal A
    &=
    \left\{
        A_c+\alpha_1G_1^A+\alpha_2G_2^A:
        \alpha_i\in[-1,1]
    \right\},\\
    \mathcal B
    &=
    \left\{
        B_c+\beta_1G_1^B+\beta_2G_2^B:
        \beta_i\in[-1,1]
    \right\},
\end{aligned}
\label{eq:bicycle_matrix_zonotopes}
\end{equation}
where
\begin{equation}
G_1^A
=
0.12
\begin{bmatrix}
0&0&-\bar v\sin\bar\psi&\cos\bar\psi&0\\
0&0& \bar v\cos\bar\psi&\sin\bar\psi&0\\
0&0&0&0&0\\
0&0&0&0&0\\
0&0&0&0&0
\end{bmatrix},
\label{eq:GA1_bicycle}
\end{equation}
\begin{equation}
G_2^A
=
0.15
\begin{bmatrix}
0&0&0&0&0\\
0&0&0&0&0\\
0&0&0&\dfrac{\tan\bar\delta}{L}&0\\
0&0&0&0&0\\
0&0&0&0&0
\end{bmatrix},
\label{eq:GA2_bicycle}
\end{equation}
and
\begin{equation}
G_1^B
=
0.10
\begin{bmatrix}
0&0\\
0&0\\
0&0\\
1&0\\
0&0
\end{bmatrix},
\qquad
G_2^B
=
0.18
\begin{bmatrix}
0&0\\
0&0\\
0&\dfrac{\bar v}{L\cos^2\bar\delta}\\
0&0\\
0&0
\end{bmatrix}.
\label{eq:GB_bicycle}
\end{equation}

Each generator perturbs a related group of entries of the nominal model.
$G_1^A$ changes the sensitivity of the position rates to heading and speed
by up to $\pm12\%$, while $G_2^A$ changes the sensitivity of the yaw rate to
speed by up to $\pm15\%$. Similarly, $G_1^B$ and $G_2^B$ change the
longitudinal-acceleration and steering gains by up to $\pm10\%$ and
$\pm18\%$, respectively.

The $10$--$18\%$ uncertainty levels used here are consistent with errors
reported in vehicle model-identification studies. For example, Seong et al.~\cite{seong2024robust} report average cornering-stiffness estimation errors of approximately $9.5\%$ and $13\%$, while Gonultas et al.~\cite{gonultas2023system} report comparable or larger differences between nominal and identified parameters on a real F1TENTH vehicle. The affine
column $c$ is left unperturbed so that the experiment isolates uncertainty in the Jacobian and input-gain terms.

The initial set $\mathcal X_0$ is an axis-aligned box centered at the
observed state, with half-widths
\[
    [0.07\,\mathrm m,\;
     0.07\,\mathrm m,\;
     0.035\,\mathrm{rad},\;
     0.10\,\mathrm{m/s}]
\]
in $(x,y,\psi,v)$; the appended constant state has zero width. These are
fixed simulation bounds chosen to be of the same order as errors reported for
high-precision vehicle state estimation. In particular,
\cite{morales2019high} reports mean errors of approximately
$4.7\,\mathrm{cm}$ in position, $1^\circ$ in orientation, and
$0.1\,\mathrm{m/s}$ in velocity.

Since the future inputs of the observed vehicle are unknown, the input set
$\mathcal U$ is centered at the current input estimate $\bar u$ with
half-widths
\[
    0.48\,\mathrm{m/s^2}
    \quad\text{and}\quad
    0.105\,\mathrm{rad}
\]
for acceleration and steering, respectively. For the simulations, we
additionally restrict the inputs to
\begin{equation}
    |a|\leq1.4\,\mathrm{m/s^2},
    \qquad
    |\delta|\leq0.42\,\mathrm{rad}.
    \label{eq:bicycle_input_limits}
\end{equation}
The steering bound is consistent with the
$0.4189\,\mathrm{rad}$ steering limit reported for a measured F1TENTH
vehicle in \cite{trumpp2023residual}. The acceleration bound is a
conservative simulation choice rather than a measured actuator limit. The prediction horizon is $T=1.65\,\mathrm{s}$, which corresponds to
approximately four replanning periods in
Sec.~\ref{sec:driving_simulation}.

The results in Secs.~\ref{subsec:accuracy}--\ref{subsec:time} use the
operating point
\begin{equation}
    \bar\xi=
    [0\;\;0\;\;18^\circ\;\;1.35]^\top,
    \qquad
    \bar u=
    [0.10\;\;7^\circ]^\top.
    \label{eq:bicycle_operating_point}
\end{equation}
The nonzero heading and steering angles ensure that both state-matrix
generators are active at the chosen operating point. In particular,
$G_2^A$ vanishes when $\bar\delta=0$, while some entries of $G_1^A$ vanish
when $\bar\psi=0$. The same operating point and uncertainty values are used
for every direction in the evaluation.

\subsection{Directional Accuracy}
\label{subsec:accuracy}

In an application, Algorithm~\ref{alg:directional_reachability} is run for
one or a few directions chosen by the planner, as in
Sec.~\ref{sec:driving_simulation}. To check that its accuracy does not rely
on a favorable choice of $d$, we evaluate it here for many position-space
directions $d(\theta)=[\cos\theta\;\,\sin\theta\;\,0\;\,0\;\,0]^\top$,
$\theta\in[0^\circ,360^\circ)$. The uncertain CORA outer approximation
$\mathcal R_{\mathrm{unc}}$ does not depend on $d$, so it is computed once
and its support is evaluated for every $\theta$. The direction-conditioned
set is recomputed for each $\theta$, since $(A^\star,B^\star)$ changes with
$d$.

For each set $\mathcal R$ and direction $d$, Fig.~\ref{fig:bicycle_directional_extent}
shows the range of the projection $d^\top x$ over $x\in\mathcal R$:
\begin{equation}
    \Bigl[\,\min_{x\in\mathcal R} d^\top x,\;
    \max_{x\in\mathcal R} d^\top x\Bigr]
    =
    \bigl[-\rho_{\mathcal R}(-d),\;\rho_{\mathcal R}(d)\bigr].
    \label{eq:signed_interval}
\end{equation}
Algorithm~\ref{alg:directional_reachability} targets only the upper end
$\rho_{\mathcal R}(d)$. As Fig.~\ref{fig:bicycle_directional_extent} shows,
across $\theta$, the upper extent of $\mathcal R_{A^\star,B^\star}$
closely tracks that of $\mathcal R_{\mathrm{unc}}$, whereas the lower
extents differ
substantially: the selected model is chosen to reach far along $+d$,
not along $-d$.

\begin{figure}[t]
    \centering
    \includegraphics[width=0.9\linewidth]
    {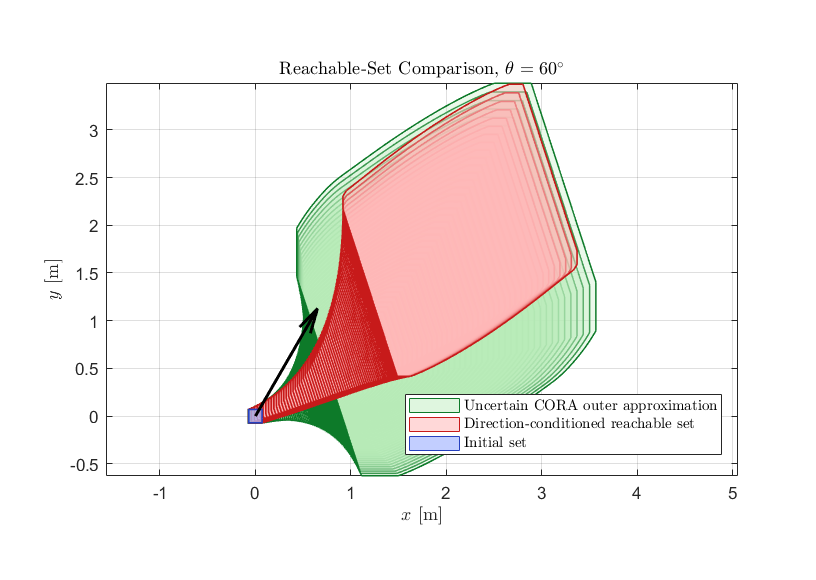}
    \caption{Reachable sets for the query $\theta=60^\circ$. Green: uncertain
    CORA outer approximation $\mathcal R_{\mathrm{unc}}$. Red:
    direction-conditioned reachable set $\mathcal R_{A^\star,B^\star}$ for the
    direction $d$ shown by the arrow. Blue: initial set. Along $d$, the red
    set extends nearly as far as the green set, although it is much smaller in
    other directions.}
    \label{fig:bicycle_60deg_reachset}
\end{figure}

\begin{figure}[t]
    \centering
    \includegraphics[width=0.9\linewidth]{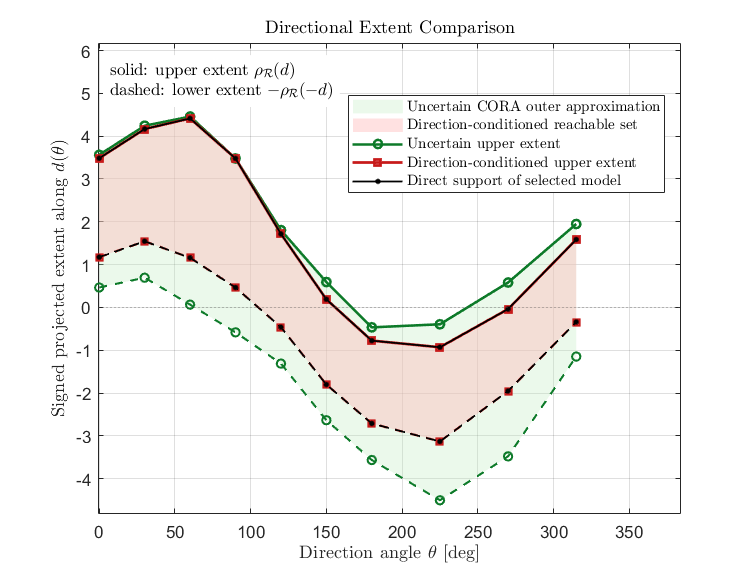}
    \caption{Range \eqref{eq:signed_interval} of $d(\theta)^\top x$ as a
    function of $\theta$. Green: uncertain CORA outer approximation
    $\mathcal R_{\mathrm{unc}}$. Red: direction-conditioned reachable set
    $\mathcal R_{A^\star,B^\star}$. Solid curves show the upper extent
    $\rho_{\mathcal R}(d)$ and dashed curves show the lower extent
    $-\rho_{\mathcal R}(-d)$. Black curves evaluate
    \eqref{eq:fixed_model_support} directly for the selected model.}
    \label{fig:bicycle_directional_extent}
\end{figure}

Table~\ref{tab:bicycle_directional_results} quantifies this for representative
directions. The upper extent of $\mathcal R_{A^\star,B^\star}$ is at most
$4.8\%$ below that of the uncertain CORA outer approximation and matches it at
$\theta=90^\circ$. Because $(A^\star,B^\star)$ is an admissible model and
$\mathcal R_{\mathrm{unc}}$ encloses the reachable sets of all admissible
models, the optimal value of Problem~\ref{prob:joint_directional_problem} lies
between the two columns; the reported gap therefore provides an upper bound
on the loss introduced by the sequential selection in
Sec.~\ref{sec:directional_selection}. Evaluating
\eqref{eq:fixed_model_support} directly for the selected model agrees with
CORA to within $7.5\times10^{-4}$ across all directions.

Fig.~\ref{fig:bicycle_60deg_reachset} shows both sets for $\theta=60^\circ$.
Along $d$, the red direction-conditioned set nearly reaches the green uncertain
CORA boundary, even though $\mathcal R_{A^\star,B^\star}$ is much smaller
overall. This is the intended trade-off: the method focuses on retaining the extent
along the queried direction rather than reproducing the complete uncertain
set.

\begin{table}[t]
    \centering
    \caption{Upper extent along $d(\theta)$ of the uncertain CORA outer
    approximation ($h^+_{\mathrm{unc}}$) and the direction-conditioned set
    ($h^+_{\mathrm{sel}}$), their relative gap, and the time for selecting
    $(A^\star,B^\star)$ ($t_{\mathrm{sel}}$) and for the complete
    Algorithm~\ref{alg:directional_reachability} ($t_{\mathrm{total}}$).
    Computing the uncertain CORA outer approximation takes
    $0.457\,\mathrm s$.}
    \label{tab:bicycle_directional_results}
    \begin{tabular}{c|c|c|c|c|c}
        \hline
        $\theta$ &
        $h_{\mathrm{unc}}^+$ &
        $h_{\mathrm{sel}}^+$ &
        gap [\%] &
        $t_{\mathrm{sel}}$ [s] &
        $t_{\mathrm{total}}$ [s]\\
        \hline
        $0^\circ$   & 3.564 & 3.484 & 2.2 & 0.0378 & 0.153\\
        $30^\circ$  & 4.243 & 4.167 & 1.8 & 0.0375 & 0.167\\
        $60^\circ$  & 4.458 & 4.414 & 1.0 & 0.0373 & 0.146\\
        $90^\circ$  & 3.478 & 3.478 & 0.0 & 0.0356 & 0.158\\
        $120^\circ$ & 1.805 & 1.718 & 4.8 & 0.0339 & 0.155\\
        \hline
    \end{tabular}
\end{table}

\subsection{Computation Time}
\label{subsec:time}

Selecting $(A^\star,B^\star)$ takes $0.0367\,\mathrm{s}$ on average, and
computing the reachable set of the selected model takes $0.118\,\mathrm{s}$,
for a total of $0.154\,\mathrm{s}$ per direction. Computing the uncertain CORA
outer approximation takes $0.457\,\mathrm{s}$, so for a single direction
Algorithm~\ref{alg:directional_reachability} is about three times faster
(Fig.~\ref{fig:bicycle_computation_time}). All timings were measured in
MATLAB R2026a with CORA \cite{althoff2015introduction} on an AMD Ryzen~7
5800H CPU.

This advantage holds only when a few directions are needed. The full uncertain
set can be queried in any number of directions once computed, whereas
Algorithm~\ref{alg:directional_reachability} must be rerun for each
direction. With the timings above, the proposed approach remains faster than
one uncertain propagation for up to two directional queries. This matches the
collision-avoidance setting considered next, where we use one direction per
nearby agent \cite{kochdumper2024real,holmes2020reachable}.

\begin{figure}[t]
    \centering
    \includegraphics[width=0.9\linewidth]{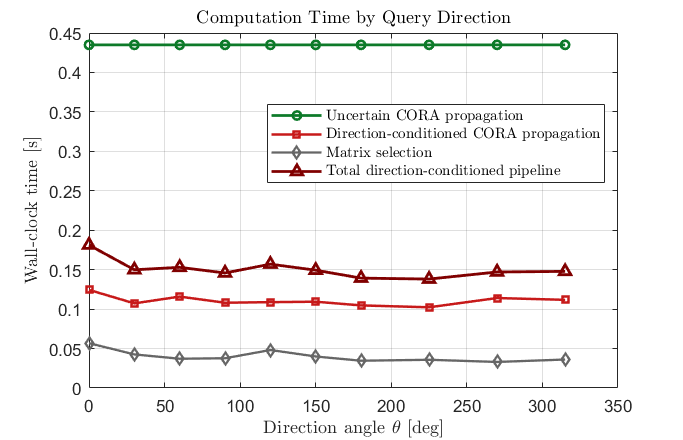}
    \caption{Computation time versus query direction $\theta$. Green:
    uncertain CORA propagation, computed once because it is independent of
    $\theta$. Red: CORA propagation of the direction-conditioned model.
    The remaining curves show matrix-selection time and the total
    direction-conditioned pipeline.}
    \label{fig:bicycle_computation_time}
\end{figure}

\section{CLOSED-LOOP MULTI-VEHICLE NAVIGATION}
\label{sec:driving_simulation}

We now use the method in closed loop. A controlled vehicle, the \emph{ego},
must reach a goal while avoiding two other vehicles whose future motion is
not known exactly. At each replanning step, the ego asks how far each nearby
vehicle can move toward it: the query direction $d$ is the unit vector from
that vehicle to the ego. Only one direction per vehicle is needed, which is
the regime in which Sec.~\ref{subsec:time} shows the method to be faster than
computing the full uncertain reachable set. Set-based prediction of the positions that other traffic participants may
occupy is used similarly in
\cite{koschi2017spot,koschi2020set,li2021prediction}.

\subsection{Scenario and Online Prediction}
\label{subsec:scenario}

The simulation takes place in a $14\,\mathrm{m}\times9\,\mathrm{m}$ planar
workspace. The ego starts at $[0.75\;\,0.85]^\top\,\mathrm m$, must reach
$p_{\mathrm{goal}}=[13.15\;\,8.15]^\top\,\mathrm m$, and moves at a constant
speed of $1.05\,\mathrm{m/s}$. The two other vehicles move according to the
nonlinear bicycle model \eqref{eq:bicycle_nonlinear}, each following its own
sequence of waypoints. The ego does not know these waypoints or the vehicles'
inputs.

The vehicles move in continuous time, but the ego measures their states only
at discrete update times $t_k=k\Delta t$, $k=0,1,\ldots$, where
$\Delta t=0.4\,\mathrm s$ is the replanning period. We describe the prediction for one vehicle and omit the vehicle index; the
same procedure is applied to each vehicle. Let $\xi_k=[x_k\;\,y_k\;\,\psi_k\;\,v_k]^\top$ be the vehicle's
measured state at $t_k$. The ego estimates its acceleration and yaw rate by
finite differences,
\begin{equation}
    \hat a_k
    =
    \frac{v_k-v_{k-1}}{\Delta t},
    \qquad
    \widehat{\dot\psi}_k
    =
    \frac{
        \operatorname{wrap}(\psi_k-\psi_{k-1})
    }{\Delta t},
    \label{eq:observed_input_estimates}
\end{equation}
where $\operatorname{wrap}(\cdot)$ maps the heading difference to
$[-\pi,\pi]$. Inverting the yaw-rate equation
$\dot\psi=(v/L)\tan\delta$ of \eqref{eq:bicycle_nonlinear}
\cite{kong2015kinematic} gives the steering estimate
\begin{equation}
    \hat\delta_k
    =
    \tan^{-1}
    \left(
        \frac{L\,\widehat{\dot\psi}_k}
             {\max(v_k,\varepsilon)}
    \right),
    \label{eq:steering_estimate}
\end{equation}
where $\varepsilon>0$ avoids division by near-zero speeds. Both estimates are
clipped to the actuator limits. We use the measured state and estimated input
as the operating point, $\bar\xi_k=\xi_k$ and
$\bar u_k=[\hat a_k\;\,\hat\delta_k]^\top$, evaluate
\eqref{eq:bicycle_Al}--\eqref{eq:bicycle_Bl} there, and build
$(A_{c,k},B_{c,k})$ and the uncertainty sets
\eqref{eq:bicycle_matrix_zonotopes} as in
Sec.~\ref{subsec:prediction_model_uncertainty}. The prediction model is thus
re-linearized at every update \cite{hebisch2020model,song2019vehicle}.

For vehicle $i$ at position $p_i(t)$ and the ego at $p_{\mathrm{ego}}(t)$,
the query direction is
\begin{equation}
    d_i(t)
    =
    \frac{
        p_{\mathrm{ego}}(t)-p_i(t)
    }{
        \|p_{\mathrm{ego}}(t)-p_i(t)\|_2
    },
    \label{eq:instantaneous_direction}
\end{equation}
padded with zeros to the dimension of $z$ and passed to
Algorithm~\ref{alg:directional_reachability} with $T=1.65\,\mathrm s$.
Direction-conditioned reachable sets are computed only for vehicles within a
$5.3\,\mathrm m$ sensing radius of the ego. The $0.4\,\mathrm s$
replanning period is longer than the time needed to
predict both vehicles with Algorithm~\ref{alg:directional_reachability}
($2\times0.154\approx0.31\,\mathrm s$, Sec.~\ref{subsec:time}), but shorter
than computing both full uncertain reachable sets would take
($2\times0.457\approx0.91\,\mathrm s$).

In the closed-loop figures, green regions show the uncertain CORA outer
approximations $\mathcal R_{\mathrm{unc}}$ associated with the local matrix
uncertainty. Red regions show the direction-conditioned reachable sets
$\mathcal R_{A^\star,B^\star}$. The green sets are shown for comparison,
while the planner uses the red direction-conditioned sets
(Fig.~\ref{fig:directional_motivation}).

\begin{figure}[t]
    \centering
    \includegraphics[width=0.65\linewidth]{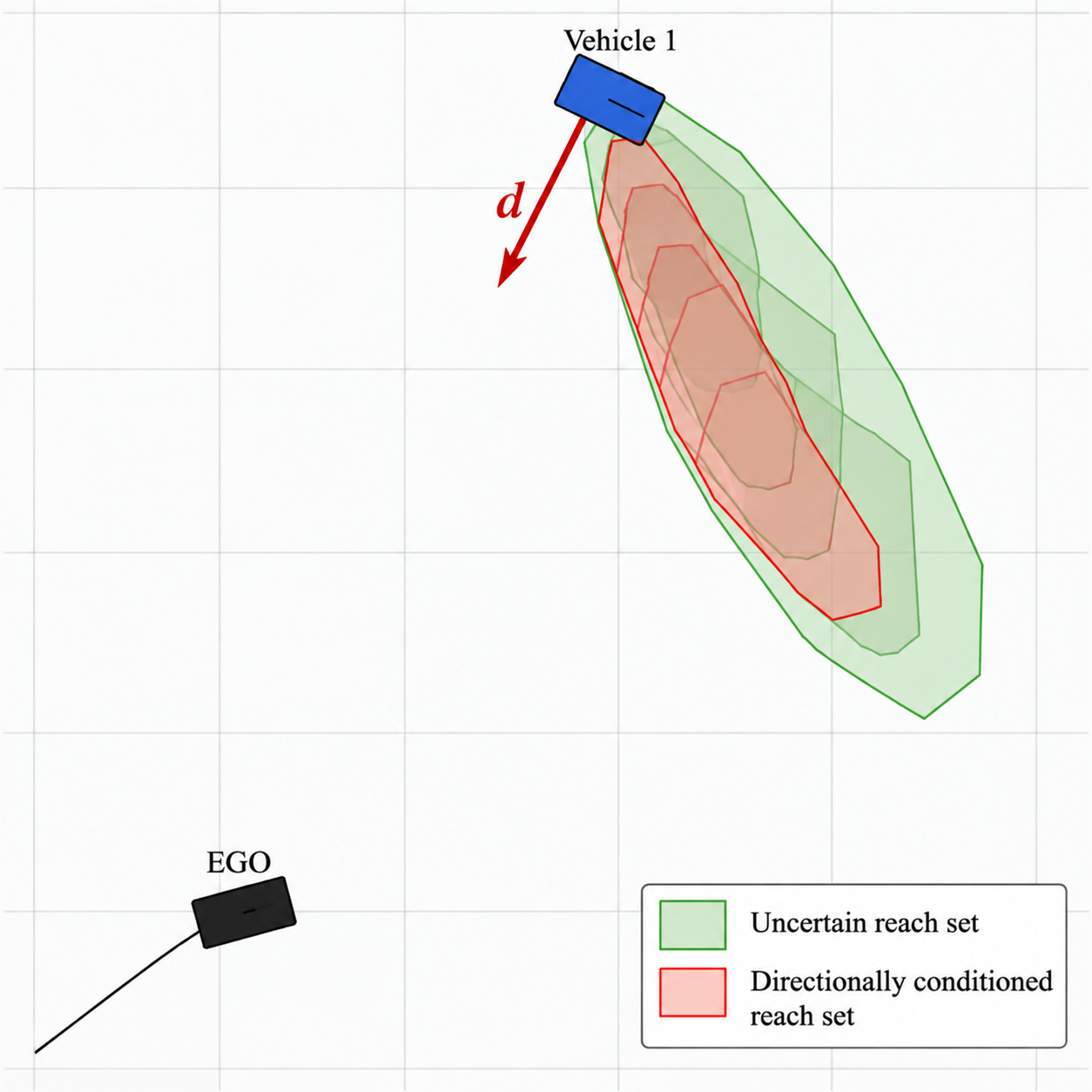}
    \caption{Reachable sets of Vehicle~1 at one update. The arrow $d$ points
    from Vehicle~1 toward the ego. Green: uncertain CORA outer approximation
    $\mathcal R_{\mathrm{unc}}$. Red: direction-conditioned reachable set
    $\mathcal R_{A^\star,B^\star}$, which the planner avoids.}
    \label{fig:directional_motivation}
\end{figure}

\subsection{Reachability-Aware Path Planning}

The ego plans its path with a grid-based A* planner \cite{hart1968formal}.
At each update, for every vehicle within the sensing radius, we compute the
direction-conditioned reachable sets at several times within the horizon,
project them onto the $(x,y)$ plane, and form their union. The result is the
set of positions the vehicle may occupy at some time during the horizon.
We enlarge this set by an additional $0.28\,\mathrm m$ as a safety margin, consistent with set-based occupancy
prediction \cite{koschi2017spot,koschi2020set}, and denote by
$\mathcal H_{\mathrm{dir}}$ the union of these enlarged sets over all nearby
vehicles. The planner then finds the shortest grid path to the goal that
does not enter $\mathcal H_{\mathrm{dir}}$:
\begin{equation}
    \gamma^\star\in\arg\min_{\gamma\in\Gamma} L(\gamma)
    \quad
    \mathrm{s.t.}
    \quad
    \gamma\cap\mathcal H_{\mathrm{dir}}=\emptyset,
    \label{eq:planning_problem}
\end{equation}
where $\Gamma$ is the set of grid paths from the ego's current position to
$p_{\mathrm{goal}}$ and $L(\gamma)$ is the Euclidean length of $\gamma$.
The ego follows $\gamma^\star$ until the next update, when
$\mathcal H_{\mathrm{dir}}$ and $\gamma^\star$ are recomputed.

\subsection{Closed-Loop Behavior}

Fig.~\ref{fig:driving_sequence} shows four frames of the simulation.
Initially, no vehicle is within the sensing radius, and the ego heads
directly toward the goal. (a) When a vehicle enters the sensing radius, its
direction-conditioned reachable set is computed. (b) This set blocks the
ego's planned path, so the planner reroutes around it. (c) A second
interaction later in the run causes another detour. (d) Once no red direction-conditioned set
intersects the route to the goal, the ego proceeds to the goal. The detours
are visible as bends in the ego's executed trajectory.

Instead of the direction-conditioned sets, the planner could use the full
uncertain reachable sets as obstacles. However, based on the timings of
Sec.~\ref{subsec:time}, computing them would not fit within the replanning
period of Sec.~\ref{subsec:scenario}. The direction-conditioned set is not
an enclosure of all possible vehicle motions. Rather, it is selected to
approach the uncertain-family extent toward the ego, the direction most
relevant to the current collision-avoidance decision. This reduces computation
and leaves more free space for the ego in directions that are not currently
queried.

\begin{figure*}[h]
    \centering
    \begin{minipage}[t]{0.48\textwidth}
        \centering
        \includegraphics[width=0.93\linewidth]
        {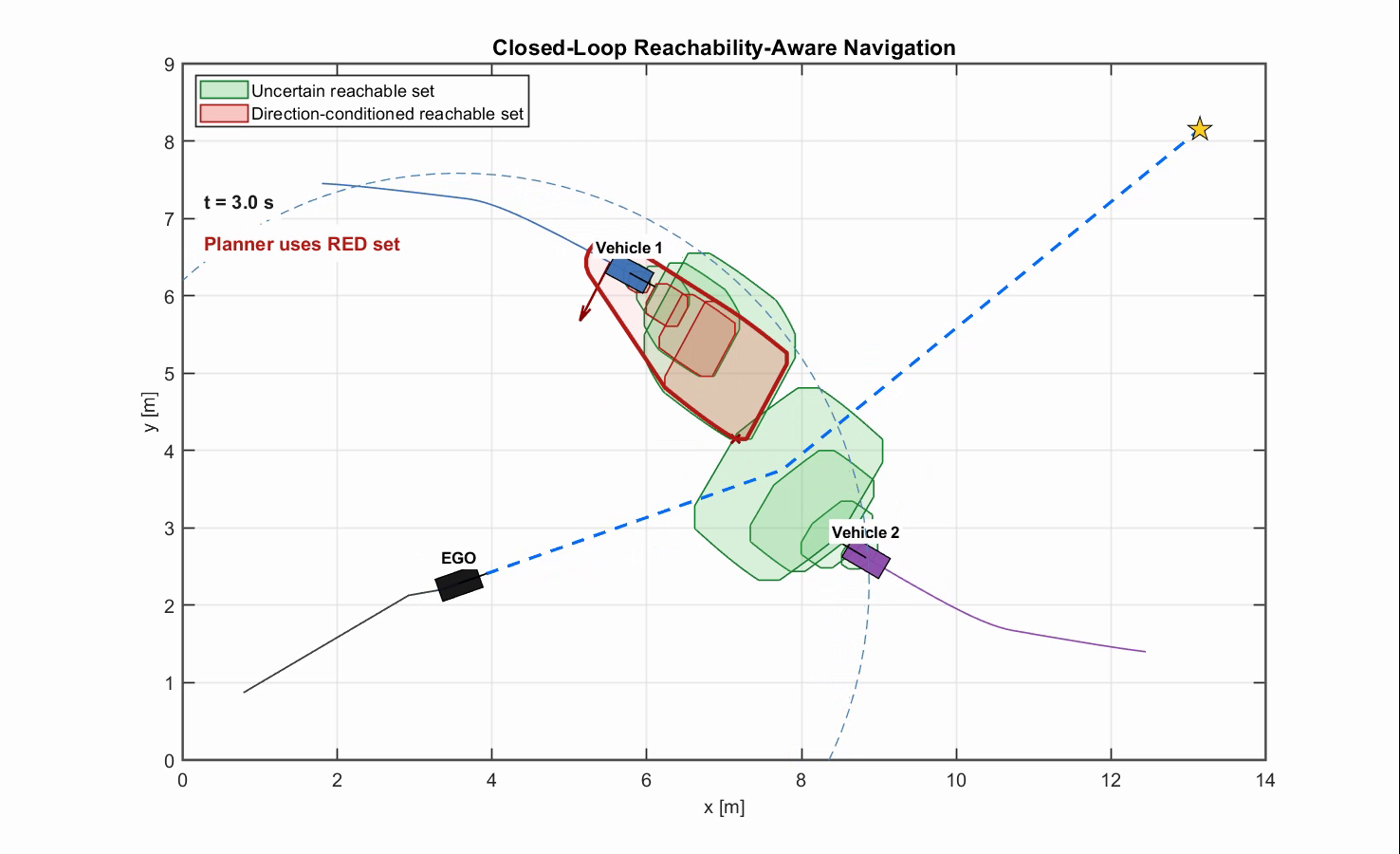}
        \vspace{-1mm}
        {\small (a) A vehicle enters the sensing radius.}
    \end{minipage}
    \hfill
    \begin{minipage}[t]{0.48\textwidth}
        \centering
        \includegraphics[width=0.93\linewidth]
        {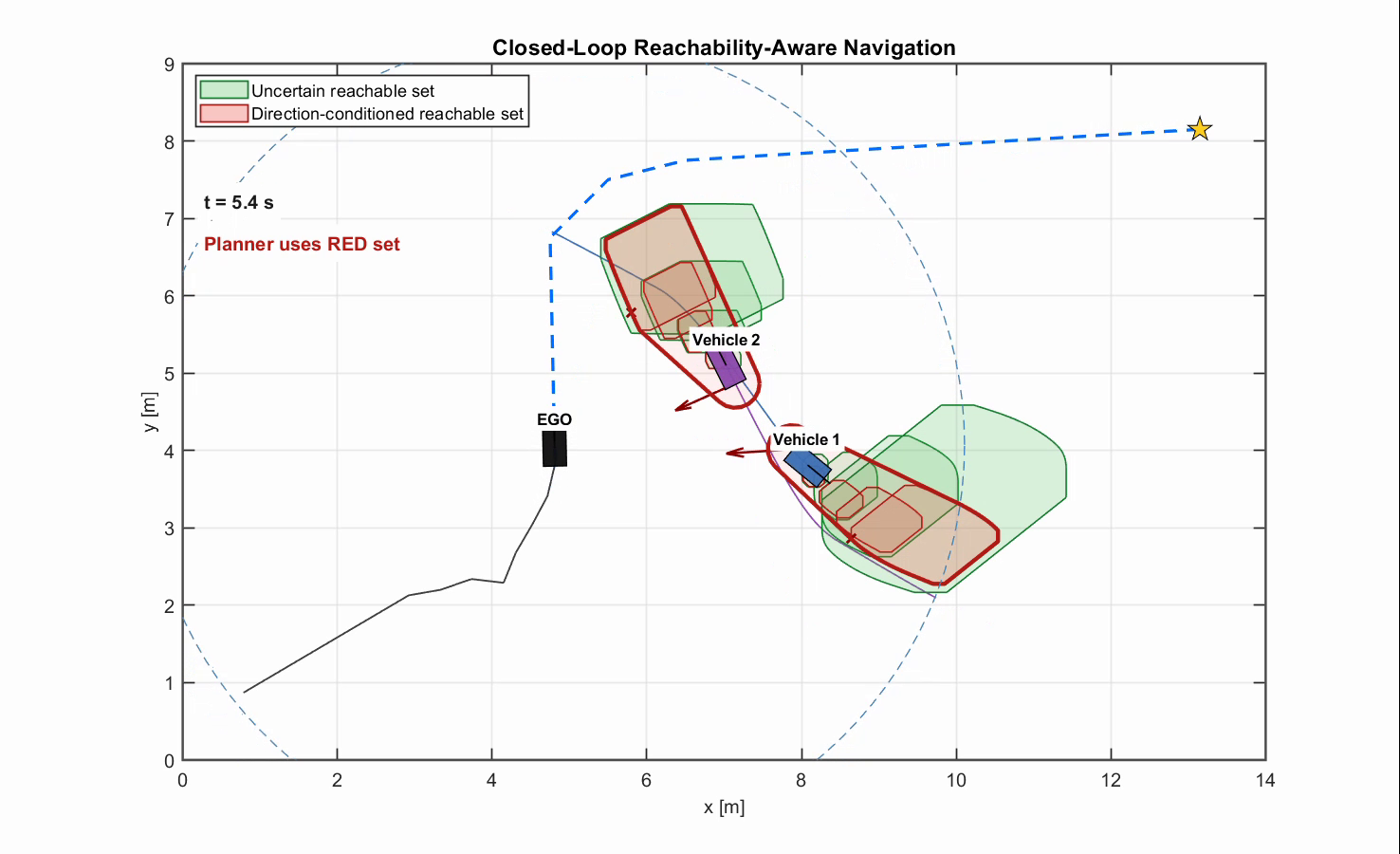}
        \vspace{-1mm}
        {\small (b) First detour.}
    \end{minipage}

    \vspace{2mm}

    \begin{minipage}[t]{0.48\textwidth}
        \centering
        \includegraphics[width=0.93\linewidth]
        {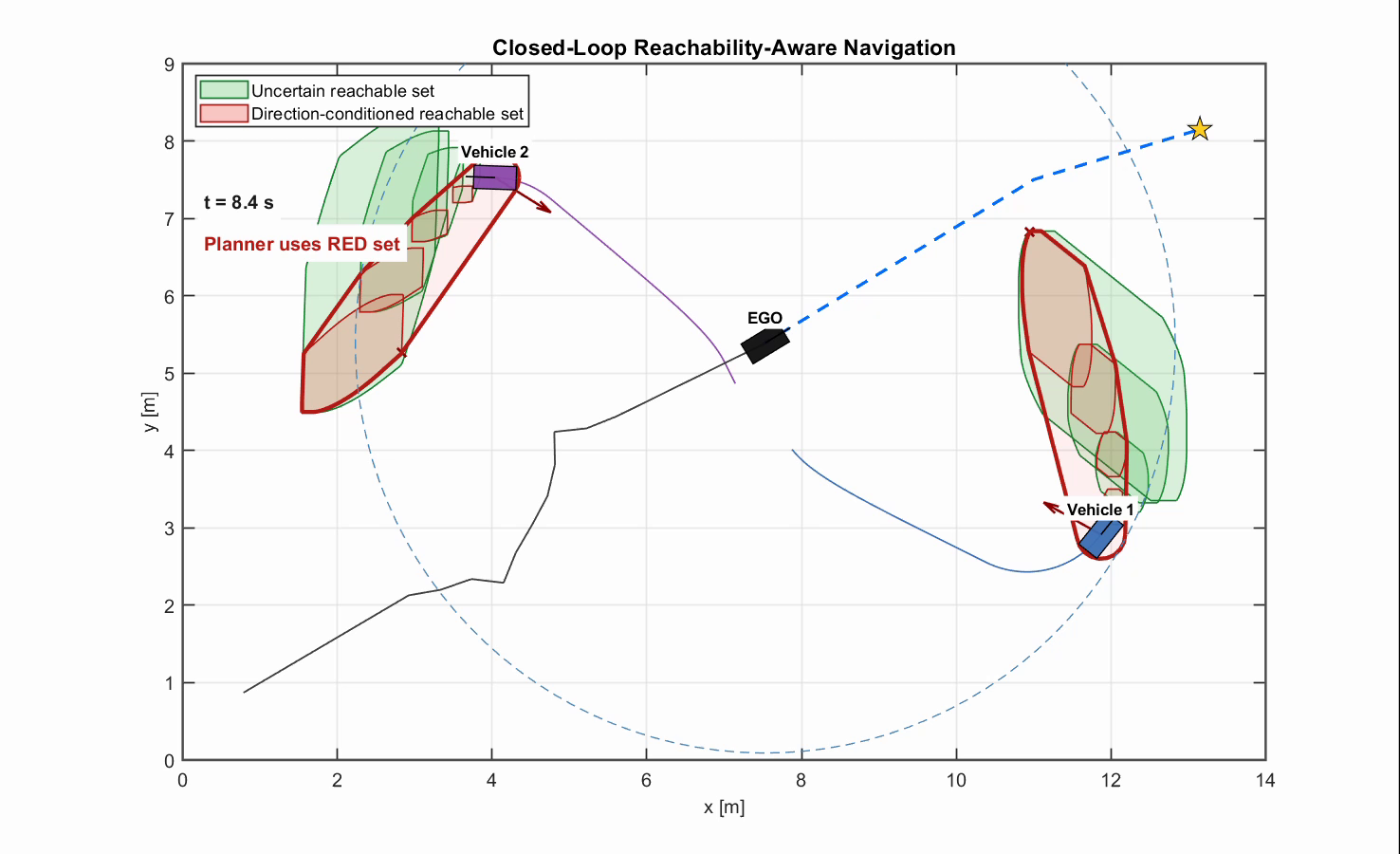}
        \vspace{-1mm}
        {\small (c) Second interaction.}
    \end{minipage}
    \hfill
    \begin{minipage}[t]{0.48\textwidth}
        \centering
        \includegraphics[width=0.93\linewidth]
        {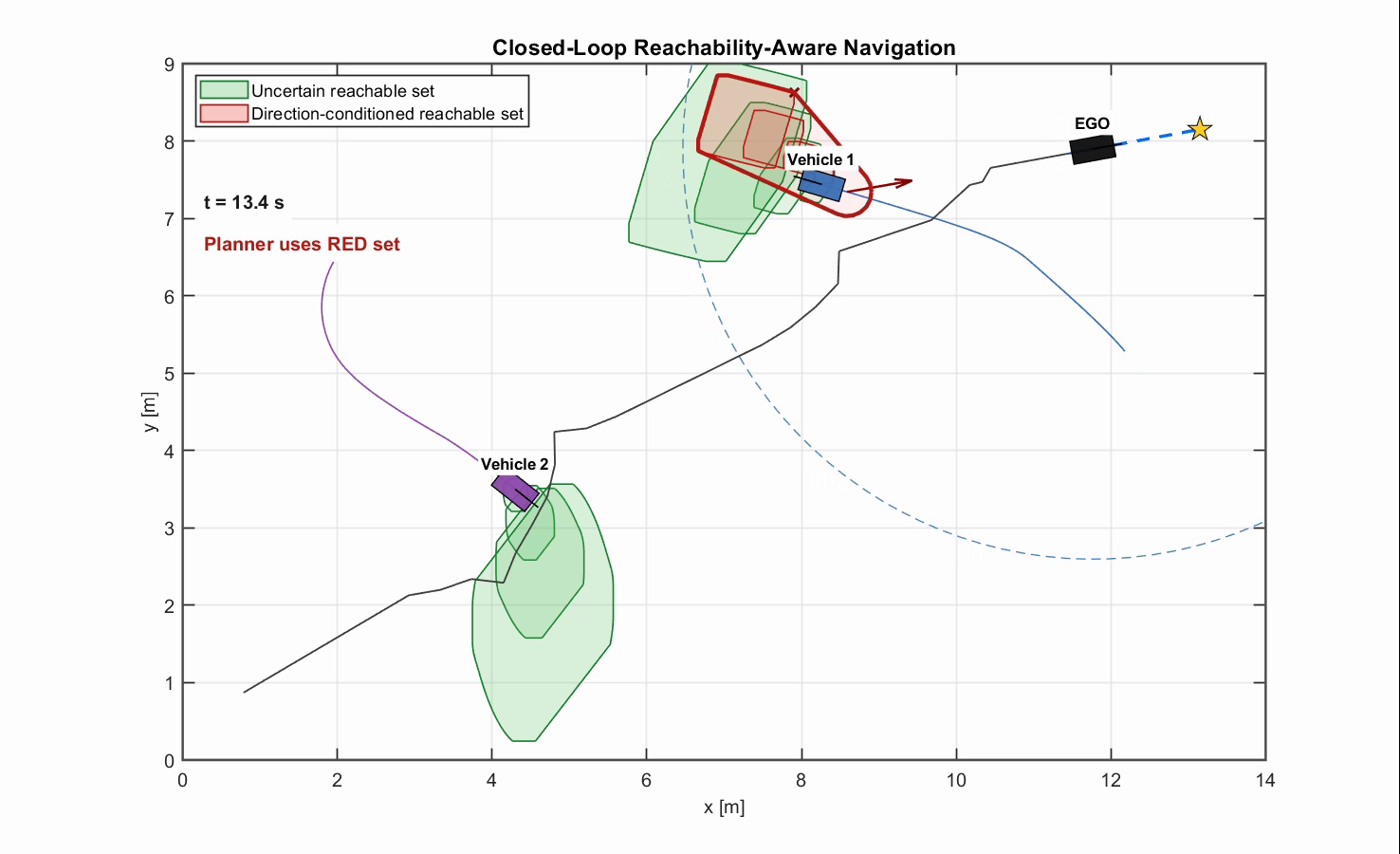}
        \vspace{-1mm}
        {\small (d) Approaching the goal.}
    \end{minipage}

    \vspace{2mm}
    \caption{Closed-loop simulation. Black: ego's executed trajectory.
    Dashed blue: current A* path. Star: goal. Dashed circle: sensing radius.
    Green: uncertain CORA outer approximations
    $\mathcal R_{\mathrm{unc}}$ (reference only). Red:
    direction-conditioned reachable sets $\mathcal R_{A^\star,B^\star}$,
    which the planner avoids. As the vehicles move, the red
    direction-conditioned sets and the A* path are recomputed at every
    update.}
    \label{fig:driving_sequence}
\end{figure*}

\section{CONCLUSION}

We presented a direction-conditioned reachability method for linear systems
whose state and input matrices are uncertain. When a planner needs only the
extent of a reachable set along one direction, the method selects a single
admissible model $(A^\star,B^\star)$ that approximately maximizes this extent
and computes the reachable set of only that model. On an uncertain linearized
bicycle model, the proposed method was about three times faster than computing
the reachable set of the full uncertain model family, while the extent along
the queried direction was within $5\%$ of the uncertain-family extent in the
reported directions.
In a closed-loop multi-vehicle simulation, the resulting predictions were fast
enough for rapid replanning around moving vehicles.

These results indicate that direction-conditioned reachability can serve as a
computational tool for planners that repeatedly query reachable sets along a
few task-relevant directions. Future work will seek stronger guarantees for the state-matrix selection and apply the method to higher-dimensional models and hardware experiments. 

\printbibliography

\end{document}